\crefname{property}{property}{Property}
\crefname{equation}{eq}{Eq}
\newtheorem{theorem}{Theorem}[section]
\newtheorem{lemma}[theorem]{Lemma}
\newtheorem{remark}[theorem]{Remark}
\newtheorem{prop}{Proposition}
\theoremstyle{definition}
\newtheorem{definition}{Definition}[section]
\newtheorem{assumption}{Assumption}
\title{On the Global Self-attention Mechanism for Graph Convolutional Networks}
\author{
  Chen Wang \\
  Department of Computer Science\\
  Rutgers University - New Brunswick\\
  Piscataway, NJ 08854, USA \\
  \texttt{chen.wang.cs@rutgers.edu} \\
  \And
 Chengyuan Deng \\
  Department of Computer Science\\
  Rutgers University - New Brunswick\\
  Piscataway, NJ 08854, USA \\
  \texttt{charles.deng@rutgers.edu} \\
}
\begin{document}
\maketitle

\begin{abstract}
Applying Global Self-attention (GSA) mechanism over features has achieved remarkable success on Convolutional Neural Networks (CNNs). However, it is not clear if Graph Convolutional Networks (GCNs) can similarly benefit from such a technique. In this paper, inspired by the similarity between CNNs and GCNs, we study the impact of the Global Self-attention mechanism on GCNs. We find that consistent with the intuition, the GSA mechanism allows GCNs to capture feature-based vertex relations regardless of edge connections; As a result, the GSA mechanism can introduce extra expressive power to the GCNs. Furthermore, we analyze the impacts of the GSA mechanism on the issues of overfitting and over-smoothing. We prove that the GSA mechanism can alleviate both the overfitting and the over-smoothing issues based on some recent technical developments. Experiments on multiple benchmark datasets illustrate both superior expressive power and less significant overfitting and over-smoothing problems for the GSA-augmented GCNs, which corroborate the intuitions and the theoretical results.
\end{abstract}


%

\section{Introduction}
\label{sec:Intro}
The emerge of Graph Convolutional Network (GCN) framework \cite{kipf2016semi} has prompted graph networks to be one of the most promising techniques in pursuing artificial general intelligence \cite{battaglia2018relational}. Inspired by the closely related field of Concolutional Neural Networks (CNNs), different attention and self-attention mechanisms have been proposed to improve the quality of information aggregation under the GCN framework (e.g. \cite{velivckovic2017graph}). Existing self-attention mechanisms in GCNs usually consider the feature information between neighboring vertices, and assign connection weights to each vertex accordingly \cite{velivckovic2017graph,lee2018graph}. This type of attention considers the local geometry as the edge connections of the graph, and exclude possible scenarios when a vertex could have strong correlations and influences with another without edge connection. To date, as we can see from a comprehensive survey \cite{wu2019comprehensive}, there has not yet been any significant work applying the Global Self-attention (GSA) mechanism to GCNs. \par

We notice that despite the absence of the theoretical study on the GSA mechanism on the GCNs, such mechanism has achieved remarkable success on the similar Convolutional Neural Networks (notably in \cite{zhang2018self}). Therefore, in this paper, inspired by the above observations, we study the effect of GSA mechanism on the GCN domain. The similarity between CNNs and GCNs (\cite{wu2019comprehensive}) makes the implementation of the GSA mechanism on GCNs straightforward. For CNNs operating on image, the GSA mechanism functions in the way to compute the inner products between pixel features. Likewise, for GCNs operating on graphs, we can view each vertex roughly as a pixel, and the local edge-based geometry as an analogy of convolutional kernels; Therefore, the GSA mechanism on GCNs can be achieved by the direct product between every pairs of nodes, regardless of edge connection. An intuitive illustration of the above process can be found in \Cref{fig:local_global}.

\begin{figure}[!h]
\centering
\includegraphics[width=0.75\textwidth]{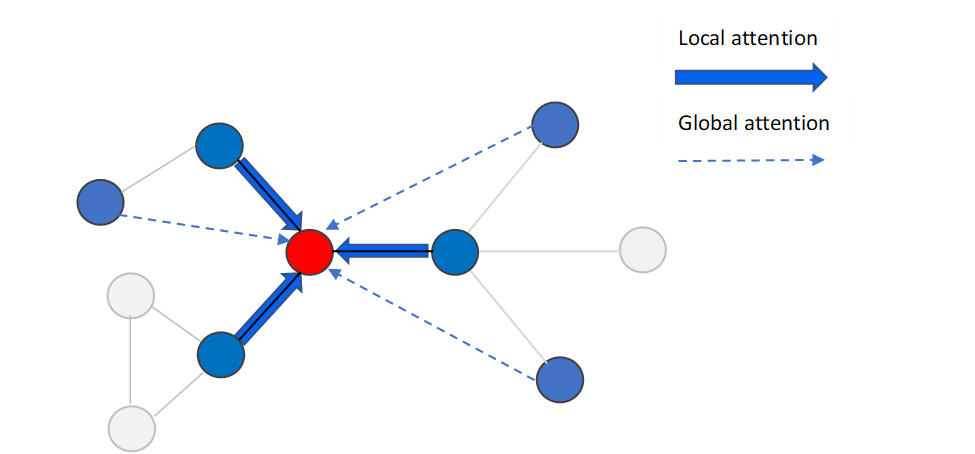}
\caption{\label{fig:local_global} Local \& Global Attention Mechanism in Graph Neural Networks. Local attention is only applied to neighbor nodes of the target node, while global attention includes every node in the graph}
\end{figure}
\par

Given the celebrated results for the GSA mechanism on CNNs \cite{zhang2018self}, it is reasonable to believe the GSA mechanism can give a performance boost for the GCN model. Similar to the intuition behind the GSA-augmented CNN, the GSA mechanism will give GCN the ability to capture long-range vertex dependencies and feature convolutions. This capacity can give the GCNs additional expressive power, especially when there exist pairs of vertices which are not connected by edges yet share similar features and are of the same class/pattern. We examine this intuition through the experiments on multiple benchmark datasets for node and graph classification tasks, and the empirical results affirm that by simply applying the GSA mechanism to plain GCN, it can outperform multiple carefully-designed advanced methods.  \par

In addition to the above intuition, we study the impacts for the GSA mechanism on overfitting and over-smoothing, two issues closely connected to the graph edge structure. Overfitting means the scenario when the testing accuracy decreases while the training accuracy still appears to climb up. Over-smoothing indicates the situation of exponentially-growing training losses as more layers are stacked. We prove that the GSA mechanism can mitigate both issues: On the overfitting problem, we decompose the loss of the GCN with the GSA mechanism into two regularization terms, and we prove that it is possible for the GSA mechanism to simulate the `edge dropout' process (\cite{rong2020dropedge}) to mitigate overfitting; On the front of over-smoothing, we prove that the GSA mechanism is roughly equivalent to interpolating a positive definite matrix to the original feature, and it can therefore reduce over-smoothing according to a recently-established theoretical framework from \cite{oono2019graph}. These theoretical results are verified by the experiments. \par

The rest of the paper is arranged as follows. \Cref{sec:review} provides a brief review of related work and the context of this paper; \Cref{sec:Method} outlines the details of applying the GSA mechanism to GCNs; The analysis on overfitting and over-smoothing is presented in \Cref{sec:analysis}, and experimental results are show in \Cref{sec:Experiments}; And finally, \Cref{sec:Conclusion} presents a general conclusion of our work.

\section{Related Work}
\label{sec:review}
We focus on the Graph Convolutional Networks based on Message-passing, which was initiated by the idea of approximating graph spectral convolution \cite{bruna2013spectral,defferrard2016convolutional,henaff2015deep} and popularized by the success of the standard GCN model \cite{kipf2016semi}. To date, there have been fruitful outcomes on novel graph networks stemmed from this strategy (i.e. \cite{hamilton2017inductive,li2018adaptive,levie2018cayleynets}). The mechanism of aggregating information from neighboring nodes can be deemed as a spatial-based approach \cite{gilmer2017neural} which is similar to the procedures in Convolutional Neural Networks (CNNs) \cite{wu2019comprehensive}. Consequently, the attention techniques in CNNs \cite{vaswani2017attention} have been widely examined from the perspective of GCNs (\cite{velivckovic2017graph,zhang2020context,Zhang2018GaANGA,wang2019sag,Xu2020SpatialTemporalTN}). \par
Despite the similarity between the two models, the attention methods in GCNs are mostly applied to local geometry with neighborhood connections, while CNN-based attention techniques are often applied to the global features. For instance, \cite{zhang2018self} provides a self-attention CNN as the generator of Generative Adversarial Networks (GANs) and shows celebrated capabilities in capturing long-range feature relations. The remarkable success indicates potential advance by applying the global attention mechanism to GCNs to get similar results. Moreover, \cite{Pei2020Geom} outlines one significant weakness of Message passing-based Graph Networks is the lack of `the ability to capture long-range dependencies'. Consequently, it can be reasonably conjectured that introducing global attention mechanism to GCNs will provide positive outcomes. Furthermore, some recent advances in GCNs have adopted ideas similar to the Global Self-attention mechanism (i.e., \cite{zhang2020context}); Nevertheless, the context and motivation for these studies are entirely different from ours.\par
From a theoretical perspective, the association between the GSA mechanism and the over-smoothing issue is significant. It has been long noticed that GCNs cannot be stacked as deep as CNNs without invoking negative effects (\cite{rong2019truly,Li2019DeepGCNsCG}). \cite{li2018deeper} provides an insight of this by showing the linear GCN is a special form of Laplacian smoothing and will converge to an feature-invariant point as the network goes deeper. Subsequently, \cite{oono2019graph} proves GCNs with Relu activation will converge to a feature-invariant space with a rate exponential to the maximum singular value of the convolutional filter. Our theorem for the GSA mechanism on over-smoothing is inspired by the above-mentioned work. The link between the GSA mechanism and overfitting is less obvious, but we notice that the GSA mechanism can simulate different methods (\cite{hamilton2017inductive,rong2020dropedge}) that are proved to alleviate overfitting. Therefore, we established our theoretical result on the GSA and overfitting based on this idea.

\section{Implementing Global Self-attention Mechanism on GCNs}
\label{sec:Method}
\subsection{Graph Convolution Networks}
Given a graph $\boldsymbol{G} = (\boldsymbol{V},\boldsymbol{E})$, a Graph Convolutional Networks takes as input a set of features $\boldsymbol{X}_i$ for every node (vertex) $v_{i}\in \boldsymbol{V}$, resulting in a node-feature matrix representation $\boldsymbol{X}\in \mathbb{R}^{n\times d}$ of a graph where $n$ is number of nodes/vertices and $d$ is number of features. We use the graph adjacency matrix $\boldsymbol{A}$ model with the size of $n\times n$ to represent the edge connections. The network layers are connected using a convolutional projection of input graph $\boldsymbol{X}$ with adjacency matrix $\boldsymbol{A}$. In practice, we add self-loop on each node to get $\boldsymbol{\hat{A}}=\boldsymbol{A}+\boldsymbol{I}$ with $\boldsymbol{I}$ as the identity matrix. Therefore, the feed-forward layer of a GCN can be expressed as
\begin{equation}
\label{equ:graphaggre}
\begin{aligned}
    \boldsymbol{H}^{(l+1)} &= f_{\boldsymbol{W}}(\boldsymbol{H}^{(l)},\boldsymbol{A}) \\
            &= \sigma(\boldsymbol{D}^{-\frac{1}{2}} \boldsymbol{\hat{A}} \boldsymbol{D}^{-\frac{1}{2}}\boldsymbol{H}^{(l)} \boldsymbol{W}^{(l)})\\
            &= \sigma( \boldsymbol{\tilde{A}}\boldsymbol{H}^{(l)} \boldsymbol{W}^{(l)}),
\end{aligned}
\end{equation}
where $\boldsymbol{W}^{(l)}$ is the weight matrix for convolution of the $l$-th layer and $\sigma(\cdot)$ is the activation function. Matrix $\boldsymbol{D}= \texttt{diag} \left( \sum_{j=1}^{N} \boldsymbol{\hat{A}}_{\cdot,j} \right)$ is the normalization matrix of $\hat{A}$, where $\texttt{diag}(\cdot)$ means the operation of expanding an $n$-length vector to a $n\times n$ matrix with the main diagonal filled by the elements of the vector.

\subsection{Global Self-attention Mechanism}
The Global Self-attention mechanism on images with Convolutional Neural Networks (CNNs) is discussed thoroughly in \cite{zhang2018self}. Transferring to the GCNs, on each layer, the self-attention layer takes the output of the previous layer $\boldsymbol{H}^{(l)}$ and calculate the influence of node $j$ on node $i$ with
\begin{equation}
\label{equ:attentionmask}
\begin{split}
    \beta_{i,j} = \texttt{softmax}_{j\in\{1,2,\cdots,n\}}[s_{i,j}] \\
    \text{where } s_{i,j}=(\boldsymbol{\hat{H}}^{(l)}_{i}\boldsymbol{W_{l}})(\boldsymbol{\hat{H}}^{(l)}_{j}\boldsymbol{W_{r}})^{T},
\end{split}
\end{equation}
where the calculation of $s_{i,j}$ is essentially a pair-wise production by summing over all the channels/features of the node. Matrices $\boldsymbol{W_{l}}$ and $\boldsymbol{W_{r}}$ serve the purpose of dimension-reduction to reduce the computational load and to provide additional flexibility to the trainable variables. We denote the result of the attention importance map as matrix $\boldsymbol{B}$, which is a $n \times n$ matrix. With the attention importance mask, the attention feature can be calculated with
\begin{equation}
\label{equ:attentionfeature}
    \boldsymbol{o}^{(l)}_{i} = (\sum_{j=1}^{N}\beta_{i,j}\boldsymbol{H}^{(l)}_{j}\boldsymbol{W}_{h})\boldsymbol{W}_{g},
\end{equation}
where $\boldsymbol{W}_{h}$ is the matrix to transform the input $\boldsymbol{H}^{(l)}$ to a lower dimension and $\boldsymbol{W}_{h}$ is the matrix to project the feature size back to the original. The operations of the above equation can be efficiently paralleled by re-writing it in the matrix production formula.  
\subsection{GSA Mechanism Augmented GCNs}
The GSA mechanism captures the feature information on a global level. Nevertheless, for graph inputs and networks, local geometry denoted by edge connections is also crucial (and it is the very characteristic that makes a \emph{graph} network). Thus, we perform an interpolation similar to \cite{zhang2018self}. The resulting Global Self-attention GCN layer goes as
\begin{equation}
\label{equ:layergraphconvolution}
    \boldsymbol{H}^{(l+1)} = \sigma((\boldsymbol{\tilde{A}}\boldsymbol{H}^{(l)} + \gamma\boldsymbol{O}^{(l)})\boldsymbol{W}^{(l)}),
\end{equation}
where $\boldsymbol{O}^{(l)}$ is the output of the global self-attention and $\gamma$ is a non-negative trainable parameter ($\gamma>0$) with initial value $0$ and $\boldsymbol{W}^{(l)}$ is the convolution/filter matrix of layer $l$. Notice that the attention feature will \emph{not} be processed by graph adjacency-based aggregation in the above operation. The necessity of $\boldsymbol{W}^{(l)}$ is questioned in \cite{wu2019simplifying} and one can drop this matrix if the computational resource is limited. Nevertheless, we keep the matrix here to serve a general purpose of applications and analysis. 
\section{Theoretical Analysis for the GSA mechanism on GCNs}
\label{sec:analysis}
In this section, we discuss the impact of GSA mechanism on the issues of overfitting and over-smoothing in Graph Convolutional Networks (GCNs). We argue that the GSA mechanism can alleviate both overfitting and over-smoothing. Specifically, we provide the intuition for the GSA mechanism to prevent GCNs from overfitting the local edge geometry in \Cref{subsec:overfit}, and show that the GSA mechanism could affect the GCN in the same way of DropEdge \cite{rong2020dropedge} under certain assumptions and model variations. Furthermore, we prove that with mild approximations and assumptions, the GSA mechanism is guaranteed to mitigate over-smoothing in \Cref{subsec:oversmooth}. \par
In the interest of the analysis, we re-formulate a simplified version of the layer in \Cref{equ:layergraphconvolution}. Notice that the linear attention transformation matrices $\boldsymbol{W}_{l}, \boldsymbol{W}_{r}, \boldsymbol{W}_{h}, \boldsymbol{W}_{g}$ are employed mainly for the purpose of reducing computational complexity and accelerating training. Thus, we simplify them to the identity matrix $\boldsymbol{I}$ in this section.

\subsection{Overfitting}
\label{subsec:overfit}
We first discuss the impact for GSA on remedy the overfitting problem. We consider the overfitting noise from the following source: In Graph Convolutional Network, we assume the vertices connected by an edge share the similar information, and the neighbors of a vertex should contain homogeneous features. However, in reality, this is not necessarily true. Therefore, if a vertex learn from an neighboring vertex with irrelevant or mixed information, it causes an overfitting problem to the local edge geometry. \par
Intuitively, the Global Self-attention mechanism can alleviate this issue in two ways. Firstly, from a geometry perspective, the GSA mechanism can aggregate information from `faraway' vertices regardless of the edge connections, and its effect can be viewed as a regularization to the graph local geometries; And secondly, from a feature perspective, the GSA mechanism can force the vertices with similar features to share information, which mitigates the noise from neighboring vertices with feature of different patterns. \par
The above intuition can be mathematically grounded as follows. Denoting the output of the last hidden layer as $\boldsymbol{H}^{(L)}=\boldsymbol{H}^{(L-1)}\boldsymbol{W}$ (without considering the activation function), and $\boldsymbol{H}^{(L-1)}$ is the hidden features before the last layer. The loss function can be denoted as:
\begin{equation}
\label{equ:loss-overall}
\mathcal{L}(\boldsymbol{\tilde{A}}\boldsymbol{H}^{(L)}+\gamma\boldsymbol{B}\boldsymbol{H}^{(L)}, \boldsymbol{y}),
\end{equation}
where $\boldsymbol{y}$ is the target (the labels of nodes or graph). Let $\boldsymbol{\bar{A}}$ be the complement of the unnormalized $\boldsymbol{A}$, which means $\boldsymbol{\bar{A}}_{i,j}=1 \leftrightarrow \boldsymbol{A}_{i,j}=0$ (i.e. the adjacency matrix of `no connections'). Also, denote $\boldsymbol{J}$ as the `all ones' matrix and $\boldsymbol{J} = \boldsymbol{I} + \boldsymbol{L}$, where $\boldsymbol{L}$ is the `all-ones except the main diagonal' matrix. The first argument (denoting as $\boldsymbol{\hat{y}}$) of the loss function of \Cref{equ:loss-overall} can be decomposed as the function of $\boldsymbol{\bar{A}}$:
\begin{align*}
\boldsymbol{\hat{y}} &= \boldsymbol{\tilde{A}}\boldsymbol{H}^{(L)} + \gamma \boldsymbol{\bar{A}}\circ\boldsymbol{B}\boldsymbol{H}^{(L)} + \gamma(\boldsymbol{J}-\boldsymbol{\bar{A}})\circ \boldsymbol{B}\boldsymbol{H}^{(L)}\\
&= \boldsymbol{\tilde{A}}\boldsymbol{H}^{(L)} + \gamma \boldsymbol{\bar{A}}\circ\boldsymbol{B}\boldsymbol{H}^{(L)} + \gamma(\boldsymbol{I}+\boldsymbol{L}-\boldsymbol{\bar{A}})\circ \boldsymbol{B}\boldsymbol{H}^{(L)}\\
&= \boldsymbol{\tilde{A}}\boldsymbol{H}^{(L)} +\gamma(\boldsymbol{\bar{A}}+\boldsymbol{L})\circ\boldsymbol{B}\boldsymbol{H}^{(L)} + \gamma (\boldsymbol{I}-\boldsymbol{\bar{A}})\circ \boldsymbol{B}\boldsymbol{H}^{(L)}\\
&=\big(\boldsymbol{\tilde{A}}+\gamma(\boldsymbol{\bar{A}}+\boldsymbol{L})\circ\boldsymbol{B}\big) \boldsymbol{H}^{(L)}  +  \gamma (\boldsymbol{I}-\boldsymbol{\bar{A}})\circ \boldsymbol{B}\boldsymbol{H}^{(L)},
\end{align*}
where $\circ$ represents the element-wise multiplication. In the above equation, let $\boldsymbol{A}'=\boldsymbol{\tilde{A}}+\gamma(\boldsymbol{\bar{A}}+\boldsymbol{L})\circ\boldsymbol{B}$, and this represents the \emph{geometric penalization} of the edge connections to avoid overfitting. Furthermore, for any super-additive loss function with property $\mathcal{L}(x+y, z)\gtrsim \mathcal{L}(x, z)+\mathcal{L}(y, 0)$, the loss function in \Cref{equ:loss-overall} can be decomposed into
\begin{equation*}
\mathcal{L}\big((\boldsymbol{\tilde{A}}+\gamma(\boldsymbol{\bar{A}}+\boldsymbol{L})\circ\boldsymbol{B}) \boldsymbol{H}^{(L)}, \boldsymbol{y}\big)  +  \mathcal{L}\big(\gamma (\boldsymbol{I}-\boldsymbol{\bar{A}})\circ \boldsymbol{B}\boldsymbol{H}^{(L)}, \boldsymbol{0} \big).
\end{equation*}
For the linear solution of the second term (i.e. $\gamma (\boldsymbol{I}-\boldsymbol{\bar{A}})\circ \boldsymbol{B}\boldsymbol{H}^{(L)}=\boldsymbol{0}$), it is equivalent to the minimization of the quadratic form:
\begin{equation*}
\frac{\gamma}{2}\cdot (\boldsymbol{H}^{(L)})^{T} (\boldsymbol{I}-\boldsymbol{\bar{A}})\circ\boldsymbol{B} (\boldsymbol{H}^{(L)}).
\end{equation*}
This term is proportional to
\begin{equation*}
\mathcal{L}_{\text{reg}=}\frac{\gamma}{2}\cdot\sum_{i,j}\mathbb{I}\big(\boldsymbol{A}_{i,j}=0\big)\cdot\langle \boldsymbol{h}^{(L-1)}_{i},\boldsymbol{h}^{(L-1)}_{i}\rangle\cdot\|\boldsymbol{h}^{(L)}_{i}-\boldsymbol{h}^{(L)}_{j}\|_{2}^{2}, 
\end{equation*}
which means the GSA mechanism will penalize the situation where disconnected vertices $v_{i}$ and $v_{j}$ share similar features in the $(L-1)$ layer but are processed to considerably different features. Putting the above together, by minimizing the target in \Cref{equ:loss-overall} with vanilla gradient descent, the GSA mechanism will encourage the upper bound of the following target:
\begin{equation}
\label{equ:regular-target}
\mathcal{L}\big((\underbrace{\boldsymbol{\tilde{A}}+\gamma(\boldsymbol{\bar{A}}+\boldsymbol{L})\circ\boldsymbol{B}}_{\text{geometry regularization}}) \boldsymbol{H}^{(L)}, \boldsymbol{y}\big) \qquad +  \underbrace{\mathcal{L}_{\text{reg}}}_{\text{feature regularization}},
\end{equation}
which supports the intuition for the GSA mechanism to alleviate overfitting.

Beyond the above insights, one can also understand the impact of the Global Self-attention mechanism on GCNs from the perspective of vertex subsampling \cite{hamilton2017inductive} and DropEdge \cite{rong2020dropedge}. These methods adopt techniques similar to Dropout, and they have been shown to alleviate overfitting. The DropEdge method follows a simple strategy to delete edges during convolution; and to see how the GSA mechanism can simulate this method, we provide an analysis that, under certain assumptions and simplifications, the impact for the GSA mechanism on a significant portion of the vertices is essentially to cancel the impact of another vertex (thus `drop an edge'). \par
The analysis begins with the definitions of the \emph{GSA feature relations} and \emph{feature influence}:
\begin{definition}
We define the \emph{feature influence} from $v_{i}$ to $v_{j}$ as the value (or vector) that $v_{i}$ can change on the feature of $v_{j}$ without normalization. Furthermore, we define the \emph{GSA feature relations} from $v_{i}$ to $v_{j}$ as a binary scalar from $\{0,1\}$ that denotes whether $v_{i}$ can change the features of $v_{j}$ under the Graph Self-attention mechanism. 
\end{definition}
\noindent
Note that \emph{feature influence} holds unconditionally for both GSA mechanism and graph convolution, while \emph{GSA feature relations} are dependent on the GSA mechanism. The exact process of the GSA mechanism is difficult to analyze since every vertex can affect each other. Therefore, we make some assumptions and simplifications:
\begin{assumption}
\label{assum:graph-feature}
We study $d$-regular graphs with following properties:
\begin{enumerate}
\item For any vertex $v$, only $r$ ($r< n$) other vertices will have non-zero \emph{GSA feature relations} towards $v$ (only $r$ other vertices can influence $v$ under the GSA mechanism).
\item The \emph{feature influence} between two vertices are either $+1$ (\emph{positive}) or $-1$ (\emph{negative}). That is to say, the influence from each vertex to another is either $\frac{1}{d}$ or $-\frac{1}{d}$.
\item For any vertex $v$, the feature influence from its neighbors $\mathcal{N}(v)$ are not homogeneous (there exist both \emph{positive} and \emph{negative}). 
\end{enumerate}
\end{assumption}
\noindent
\begin{remark}
We can see that the above simplifications and assumptions are realistic. The first assumption seems strong; Nevertheless, since the \texttt{softmax()} in \Cref{equ:attentionmask} will suppress the score of a majority of vertices, the number of vertices with non-zero relations to $v \in \boldsymbol{V}$ under the GSA mechanism should indeed be a small fraction. The second assumption restricts the feature influence to $+1$ and $-1$; And usually, the influence on vertices are indeed `augment' or `weaken' the feature pattern. The third assumption is merely stating the source of overfitting exists. 
\end{remark}
\noindent
We present the following theorem for the graphs with the aforementioned assumptions:
\begin{theorem}
\label{them:dropedge}
Let the interpolation parameter $\gamma$ in \Cref{equ:layergraphconvolution} be $\gamma=\frac{1}{d}$. There exists a way to arrange the \emph{GSA feature relations}, such that for at least $C\cdot (r+1)^{\frac{3}{2}} \cdot \frac{n}{4^{r}}$ ($C>\sqrt{\pi}$ is a constant) vertices, the Global Self-attention mechanism will eliminate the influence of one of its neighboring vertices. 
\end{theorem}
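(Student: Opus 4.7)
The plan is to prove \Cref{them:dropedge} by the probabilistic method: construct a random arrangement of GSA feature relations, compute the expected number of vertices at which at least one neighbor's influence is cancelled, and then derandomize. Since the theorem is an existence statement over arrangements, this is the natural route, and the final bound should emerge from a central-binomial estimate via Stirling.

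First I would pin down the cancellation event. Under the simplified layer of \Cref{equ:layergraphconvolution} with $\boldsymbol{W}_l=\boldsymbol{W}_r=\boldsymbol{W}_h=\boldsymbol{W}_g=\boldsymbol{I}$ and $\gamma=1/d$, \Cref{assum:graph-feature} says that a neighbor $u$ of $v$ contributes $\pm 1/d$ to $v$ through $\boldsymbol{\tilde{A}}\boldsymbol{H}^{(L)}$, while a GSA-influencer contributes $\gamma\cdot(\pm 1)=\pm 1/d$ through $\boldsymbol{B}\boldsymbol{H}^{(L)}$. Hence the total contribution of $u$ to $v$ vanishes precisely when $u$ is one of the $r$ GSA-influencers of $v$ and the GSA-side sign is the opposite of the graph-side sign. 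Call this the cancellation event $E_v$ at $v$; it is exactly the edge-dropout phenomenon the theorem aims to produce.

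Next I would set up the random construction: for each $v$, pick its $r$ GSA-influencers independently and uniformly from $\boldsymbol{V}\setminus\{v\}$, and assign each one an independent uniform sign in $\{+1,-1\}$. For a fixed $v$ with $d$ neighbors (whose graph-side signs are fixed by features, and by \Cref{assum:graph-feature}(3) contain both $+1$ and $-1$), I would compute $\Pr[E_v]$ by conditioning on the number $k$ of neighbors that end up in the chosen $r$-subset,
\begin{equation*}
\Pr[E_v]=\sum_{k=1}^{\min(r,d)}\frac{\binom{d}{k}\binom{n-1-d}{r-k}}{\binom{n-1}{r}}\bigl(1-2^{-k}\bigr),
\end{equation*}
after absorbing the sign-matching event into the factor $(1-2^{-k})$ (the probability that among $k$ random signs at least one lands opposite to its neighbor's fixed sign).

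The key step, and the main obstacle, is to extract from this expression the scaling $(r+1)^{3/2}/4^{r}$. The plan is to recognise the inner sum as (up to negligible corrections in $n$) a reweighted binomial expansion whose dominant term involves the central binomial coefficient $\binom{2r}{r}$; Stirling's approximation $\binom{2r}{r}\sim 4^{r}/\sqrt{\pi r}$ then yields a per-vertex probability of order $\sqrt{\pi}(r+1)^{3/2}/4^{r}$, which is the reciprocal of the $r$-th Catalan number up to polynomial factors and explains the constant requirement $C>\sqrt{\pi}$. Linearity of expectation gives $\mathbb{E}[\#\{v:E_v\}]\ge C\cdot(r+1)^{3/2}\cdot n/4^{r}$, and by the probabilistic method at least one deterministic arrangement of GSA feature relations attains this count, proving the theorem. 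I expect the bookkeeping inside the binomial sum, especially isolating which term contributes the $(r+1)^{3/2}/4^{r}$ factor and bounding the remainder uniformly in $d$, to be the most delicate part; the remainder of the argument is routine once that asymptotic is in hand.
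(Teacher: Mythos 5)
Your probabilistic-method plan diverges from what the theorem actually permits you to choose, and this creates two concrete gaps. First, \Cref{assum:graph-feature} fixes the sign of the \emph{feature influence} between every pair of vertices as a property of the data; the only degree of freedom the theorem grants is the arrangement of the binary \emph{GSA feature relations} (which $r$ vertices influence each $v$). Your construction additionally assigns ``each one an independent uniform sign,'' which is not a legitimate randomization. Second, your cancellation event is mis-specified: with the softmax normalization of \Cref{equ:attentionmask}, a single opposite-signed influencer among $r$ contributes roughly $\gamma\cdot\frac{1}{r}=\frac{1}{dr}$ to $v$, not $\frac{1}{d}$, so it does not cancel a neighbor's $\pm\frac{1}{d}$ graph-side influence unless $r=1$. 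Exact cancellation requires the \emph{entire} influencer set of $v$ to be monochromatic, so that the normalized GSA contributions sum to exactly $\pm 1$ and $\gamma=\frac{1}{d}$ scales this to $\pm\frac{1}{d}$. This is precisely why the paper's proof invokes Ramsey's theorem: among every $R(r+1,r+1)$ vertices there is an $(r+1)$-set with pairwise influences all $+1$ or all $-1$, and designating each such set as its own members' influencers yields the exact cancellation at each of its $r+1$ vertices; the count $C(r+1)^{3/2}\frac{n}{4^{r}}$ then falls out of the upper bound $R(s,s)\leq[1+o(1)]\frac{4^{s-1}}{\sqrt{\pi s}}$ applied to disjoint blocks.

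Even granting your setup, the asymptotics would not deliver the claimed bound. In your hypergeometric sum the probability that any fixed neighbor lands in a uniformly random $r$-subset of $\boldsymbol{V}\setminus\{v\}$ is $O\!\left(\frac{dr}{n}\right)$, so $\Pr[E_v]=O\!\left(\frac{dr}{n}\right)$ and linearity of expectation gives an expected count of $O(dr)$ cancelling vertices --- independent of $n$ to leading order, and far below $\Omega\!\left((r+1)^{3/2}\frac{n}{4^{r}}\right)$. The $\frac{\sqrt{\pi}(r+1)^{1/2}}{4^{r}}$ factor you are trying to extract does trace back to a central binomial coefficient, but through the Erd\H{o}s--Szekeres bound $R(s,s)\leq\binom{2s-2}{s-1}\sim\frac{4^{s-1}}{\sqrt{\pi(s-1)}}$, not through a Stirling estimate of a per-vertex success probability. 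The ``delicate bookkeeping'' you flag as the main obstacle cannot be made to work; the Ramsey-theoretic route is not an optional alternative here but the mechanism that simultaneously supplies the monochromatic influencer sets (needed for exact cancellation) and the quantitative count.
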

\begin{proof}
This is a natural result following the well-renowned Ramsey theorem. By Ramsey theorem and the assumption, for every $R(r+1,r+1)$ vertices ($R(\cdot, \cdot)$ denotes the Ramsey number), there exist $(r+1)$ vertices with feature influences all positive ($+1$) or all negative ($-1$). Define the set of these $(r+1)$ vertices as $\boldsymbol{V}'$, and we can arrange the \emph{GSA feature relations} in a way that for any $v' \in \boldsymbol{V}'$, vertices $\boldsymbol{V}'/\{v'\}$ will be the $r$ vertices with non-zero \emph{GSA feature relations} to $v'$. \par
Therefore, if we fix a vertex $v' \in \boldsymbol{V}'$, the cumulative influence to $v'$ from the Global Self-attention mechanism is either $+1$ or $-1$ (with the normalization). Pick the interpolation parameter as $\gamma=\frac{1}{d}$, the overall influence from the the Global Self-attention mechanism to $v'$ is either $+\frac{1}{d}$ or $-\frac{1}{d}$. Furthermore, since the feature influences from $\mathcal{N}(v')$ (following the edge geometry) are \emph{not} entirely positive or negative, there must be one vertex in $\mathcal{N}(v')$ with its feature influence eliminated. \par
For all the vertices in $\boldsymbol{V}'$, we have at least $(r+1)$ vertices undergoing the `DropEdge' procedure. Now, suppose in the worst case, the $\boldsymbol{V}'$ sets with mutually positive or negative influences vertices do not overlap; in this case, we can find at least $(r+1)$ qualified vertices among every $R(r+1,r+1)$ vertices. A well-establish upper bound for the $R(s,s)$ Ramsey number (see \cite{bollobas2012graph}) is $[1+o(1)]\frac{4^{s-1}}{\sqrt{\pi s}}$. Plugging the number of $(r+1)$, this will give us $\frac{n}{4^{r}}\cdot \sqrt{\pi}[1+o(1)] \cdot (r+1)^{\frac{1}{2}}$ sets of $\boldsymbol{V}'$ vertices, and an overall number of vertices of at least $\sqrt{\pi}[1+o(1)]\cdot (r+1)^{\frac{3}{2}} \cdot \frac{n}{4^{r}}$ to undergo the `edge elimination' process.
\end{proof}
\begin{remark}
We remark that when assumption 1.1) does not hold, we can modify the Global Self-attention mechanism to a `refined' version that sub-samples $r$ vertices to compute feature influences. In this way, it is possible to design an algorithm that with \emph{any} arrangement of the relations between features, a large portion of vertices will undergo the edge dropout process with high probability. However, as we focus on understanding the GSA mechanism in this paper, we leave the above idea as a future direction to pursue.
\end{remark}
\subsection{Over-smoothing}
\label{subsec:oversmooth}
Our result on over-smoothing is based on the theory in \cite{oono2019graph}, which analyzes the over-smoothing problem as the convergence distance to an invariant subspace. Under the ReLu activation, the distance between the layer output and the space is upper-bounded by $O((s\lambda)^{L})$, where $s$ is the maximum singular value of the weight matrix, $\lambda$ is a parameter related to the Graph Laplacian, and $L$ is the number of layers. We demonstrate that, as long as the features of vertices are \emph{independent}, applying the Global Self-attention mechanism is equivalent to performing convolution with a substituting weight matrix with a larger maximum singular value. We remark that our proof follows an idea similar to \cite{rong2020dropedge}. The difference is that the proof in \cite{rong2020dropedge} established a notion of the $\epsilon$-smoothing layer, while our proof is underpinned by the explicit increment of $s$.\par
In the below sections, unless explicitly specifying, we use shorted-handed notations $\boldsymbol{H} := \boldsymbol{H}^{(l)}$ and $\boldsymbol{W} := \boldsymbol{W}^{(l)}$ to denote the output matrix at any layer. The convolutional result of each layer can therefore be denoted as
\begin{equation}
\label{equ:convolutionoutput}
(\boldsymbol{H} + \gamma\boldsymbol{\tilde{A}}^{-1}\boldsymbol{B}\boldsymbol{H})\boldsymbol{W},
\end{equation}
where $\boldsymbol{\tilde{A}}$ invertible as it is an approximation of the graph Laplacian. We assume the output matrices $\boldsymbol{H}$ satisfy the following property:
\begin{assumption}
$\boldsymbol{H}$ is of full column ranks. Notice that this implies $(\boldsymbol{H}^{T}\boldsymbol{H})$ is invertible, Hermitian and positive definite. We further assume $(\boldsymbol{H}\boldsymbol{H}^{T})$ is invertible.
\end{assumption}
\noindent
Also, we employ the following approximation of $\boldsymbol{B}$ and $\boldsymbol{\tilde{A}}$ in the analysis:
\begin{assumption}
We use $\boldsymbol{\hat{B}}$ as an Hermitian and positive definite approximation of $\boldsymbol{B}$. Moreover, we use $\boldsymbol{\tilde{A}'}=(1+\epsilon)\boldsymbol{I}+\boldsymbol{D}^{-\frac{1}{2}} \boldsymbol{A} \boldsymbol{D}^{-\frac{1}{2}}$ ($\epsilon>0$) as a variation of graph Laplacian to approximate $\boldsymbol{\tilde{A}}$. Notice that $\boldsymbol{\tilde{A}'}$ is also Hermitian and positive definite as it is strictly diagonally dominant.
\end{assumption}
\noindent
\begin{remark}
The above assumptions and approximation are easy to satisfy in real-world applications. Assumption 2 almost merely assumes the features of different dimensions are \emph{independent}. The $\boldsymbol{\hat{B}}$ matrix can be viewed as a pre-Softmax version of $\boldsymbol{B}$, which is indeed Hermitian and the main diagonal of $\boldsymbol{B}$ is at least as large as any other element. Also, $\boldsymbol{\tilde{A}'}$ without the $\epsilon$ term will be Hermitian and positive semi-definite itself -- and the input corresponds to the $0$ eigenvalue is unique.
\end{remark}
\noindent
Denote the output of \Cref{equ:convolutionoutput} as $\boldsymbol{H}\boldsymbol{\tilde{W}}$, with some simple algebraic manipulation, the expression of $\boldsymbol{\tilde{W}}$ will be:
\begin{equation}
\label{equ:wTildeExplicit}
\begin{aligned}
\boldsymbol{\tilde{W}} & = (\boldsymbol{I} + \gamma(\boldsymbol{H}^{T}\boldsymbol{H})^{-1}{\boldsymbol{H}}^{T}\boldsymbol{\tilde{A}}^{-1}\boldsymbol{B}\boldsymbol{H})\boldsymbol{W}\\
& \approx (\boldsymbol{I} + \gamma(\boldsymbol{H}^{T}\boldsymbol{H})^{-1}{\boldsymbol{H}}^{T}\boldsymbol{\tilde{A}'}^{-1}\boldsymbol{\hat{B}}\boldsymbol{H})\boldsymbol{W}.
\end{aligned}
\end{equation}
To simplify the notations, we define $\boldsymbol{\hat{C}}:=\boldsymbol{\tilde{A}'}^{-1}\boldsymbol{\hat{B}}$, which is a Hermitian and positive definite matrix. Finally, we also set $\gamma>0$ in the analysis as it will be otherwise without the GSA mechanism. \par
Now we are ready to introduce the definitions and results in \cite{oono2019graph} as our preliminaries. The main concepts of \cite{oono2019graph} are the invariant subspace and its distance between a matrix. Formally, they are defined as follows:
\begin{definition}[\cite{oono2019graph}]
For $M,N \in \mathbb{N}^{+}$ and $M<N$, suppose there exists a subspace $U \subseteq \mathbb{R}^{M}$ for $\mathbb{R}^{N}$ such that:
\begin{itemize}
\item $U$ has orthonormal basis $\boldsymbol{E}$ consisting of non-negative vectors;
\item $U$ is invariant under $P$;
\end{itemize}
then we define $\mathcal{M}:= U \otimes R^{C} = \{\sum_{i=1}^{M}\boldsymbol{e}_{i} \otimes \boldsymbol{k}_{i} | \boldsymbol{k}_{i} \in \mathbb{R}^{C}\} = \{\boldsymbol{E}\boldsymbol{K}|\boldsymbol{K} \in \mathbb{R}^{M \times C}\}$ as the subspace of $\mathbb{R}^{N \times C}$. Furthermore, for any matrix $\boldsymbol{H} \in \mathbb{R}^{N \times C}$, we define the distance between $\boldsymbol{H}$ and $\mathcal{M}$ as $d_{\mathcal{M}}(\boldsymbol{H}) := \inf_{\boldsymbol{Y}\in\mathcal{M}}\{\|\boldsymbol{H}-\boldsymbol{Y}\|_{F}\}$.
\end{definition}
Following the notions in the literature, we denote the maximum singular value of $\boldsymbol{W}^{l}$ as $s_{l}$, and let $s:=\sup_{l \in \mathbb{N}^{+}} s_{l}$. We recall Theorem 2 and Corollary 2 of \cite{oono2019graph} for a multi-layer GCN to form the following lemma:
\begin{lemma}[\cite{oono2019graph}]
\label{lem:gcnoriginal}
For any initial value $\boldsymbol{H}^{(0)}$ and the output of the $l$-th layer as $\boldsymbol{H}^{(l)}$, the convergence rate to $\mathcal{M}$ is $d_{\mathcal{M}}(\boldsymbol{H}^{(l)})=O((s\lambda)^{l})$, where $\lambda$ is the second-largest eigenvalue of the normalized graph Laplacian. Furthermore, $\boldsymbol{H}^{(l)}$ satisfies $d_{\mathcal{M}}(\boldsymbol{H}^{(l)})\leq(s\lambda)^{l}d_{\mathcal{M}}(\boldsymbol{H}^{(0)})$.
\end{lemma}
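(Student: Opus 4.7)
The plan is to establish the one-step contraction $d_{\mathcal{M}}(\boldsymbol{H}^{(l+1)}) \le s_{l}\lambda\, d_{\mathcal{M}}(\boldsymbol{H}^{(l)})$, after which the full lemma follows immediately by induction on $l$ together with $s_{l}\le s$. To obtain this one-step bound I would decompose a GCN layer $\boldsymbol{H}^{(l+1)} = \sigma(\boldsymbol{\tilde A}\boldsymbol{H}^{(l)}\boldsymbol{W}^{(l)})$ into its three constituent operations — graph aggregation $\boldsymbol{H}\mapsto\boldsymbol{\tilde A}\boldsymbol{H}$, right multiplication $\boldsymbol{X}\mapsto \boldsymbol{X}\boldsymbol{W}^{(l)}$, and entrywise ReLU $\sigma$ — and control each one separately against the distance $d_{\mathcal{M}}$.

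For the aggregation step, I would use the orthogonal decomposition $\mathbb{R}^{N}=U\oplus U^{\perp}$. Writing $\boldsymbol{H}=\boldsymbol{E}\boldsymbol{E}^{T}\boldsymbol{H}+(\boldsymbol{I}-\boldsymbol{E}\boldsymbol{E}^{T})\boldsymbol{H}$, the first term lies in $\mathcal{M}$ and stays in $\mathcal{M}$ under $\boldsymbol{\tilde A}$ by invariance of $U$; on the orthogonal complement, the spectral gap gives $\|\boldsymbol{\tilde A}(\boldsymbol{I}-\boldsymbol{E}\boldsymbol{E}^{T})\boldsymbol{H}\|_{F}\le \lambda\|(\boldsymbol{I}-\boldsymbol{E}\boldsymbol{E}^{T})\boldsymbol{H}\|_{F}=\lambda\, d_{\mathcal{M}}(\boldsymbol{H})$. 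Hence $d_{\mathcal{M}}(\boldsymbol{\tilde A}\boldsymbol{H})\le\lambda\, d_{\mathcal{M}}(\boldsymbol{H})$. For the linear step, note that $\mathcal{M}$ is closed under right multiplication, since $\boldsymbol{Y}=\boldsymbol{E}\boldsymbol{K}$ implies $\boldsymbol{Y}\boldsymbol{W}^{(l)}=\boldsymbol{E}(\boldsymbol{K}\boldsymbol{W}^{(l)})\in\mathcal{M}$; picking $\boldsymbol{Y}^{*}$ that realizes $d_{\mathcal{M}}(\boldsymbol{X})$ and applying submultiplicativity of the Frobenius norm against the operator norm gives $d_{\mathcal{M}}(\boldsymbol{X}\boldsymbol{W}^{(l)})\le\|(\boldsymbol{X}-\boldsymbol{Y}^{*})\boldsymbol{W}^{(l)}\|_{F}\le s_{l}\, d_{\mathcal{M}}(\boldsymbol{X})$.

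For the ReLU step, I would carry forward the auxiliary invariant that $\boldsymbol{H}^{(l)}$ is entrywise nonnegative (which is preserved by $\sigma$, by $\boldsymbol{\tilde A}\ge 0$, and by a nonnegative initialization). Under this invariant, the specific minimizer $\boldsymbol{Y}^{*}=\boldsymbol{E}\boldsymbol{E}^{T}\boldsymbol{X}$ is again nonnegative because $\boldsymbol{E}\ge 0$, so $\sigma(\boldsymbol{Y}^{*})=\boldsymbol{Y}^{*}\in\mathcal{M}$, and the $1$-Lipschitz property of $\sigma$ yields $d_{\mathcal{M}}(\sigma(\boldsymbol{X}))\le\|\sigma(\boldsymbol{X})-\sigma(\boldsymbol{Y}^{*})\|_{F}\le d_{\mathcal{M}}(\boldsymbol{X})$. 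Composing the three bounds gives the desired per-layer contraction by $s_{l}\lambda$, which iterates to $d_{\mathcal{M}}(\boldsymbol{H}^{(l)})\le(s\lambda)^{l}d_{\mathcal{M}}(\boldsymbol{H}^{(0)})$ and hence the $O((s\lambda)^{l})$ rate.

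The main obstacle I anticipate is the ReLU step: its treatment depends critically on the nonnegativity of the orthonormal basis $\boldsymbol{E}$ so that the orthogonal projection onto $\mathcal{M}$ remains in the nonnegative orthant and is therefore a fixed point of $\sigma$. This is precisely the reason that the preceding definition of $\mathcal{M}$ insists on a nonnegative basis, and it is also why the induction must transport the entrywise nonnegativity of $\boldsymbol{H}^{(l)}$ alongside the distance bound. Once this invariant is stated explicitly, the rest of the argument is a routine composition of norm inequalities; the conceptual subtlety is entirely concentrated in ensuring that $\mathcal{M}$ behaves well under the nonlinearity.
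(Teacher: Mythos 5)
The paper never actually proves this lemma: it is imported wholesale from \cite{oono2019graph} (``Theorem 2 and Corollary 2''), so your reconstruction has to be judged against the source's argument. Your overall architecture --- a one-step contraction $d_{\mathcal{M}}(\boldsymbol{H}^{(l+1)})\le s_{l}\lambda\,d_{\mathcal{M}}(\boldsymbol{H}^{(l)})$ obtained by splitting the layer into aggregation, right multiplication by $\boldsymbol{W}^{(l)}$, and ReLU, then iterating --- is exactly the strategy of that source, and your aggregation and linear steps are sound: $d_{\mathcal{M}}(\boldsymbol{H})=\|(\boldsymbol{I}-\boldsymbol{E}\boldsymbol{E}^{T})\boldsymbol{H}\|_{F}$, invariance of $U$ handles the projected part, the spectral gap handles the complement, and $\mathcal{M}$ is closed under right multiplication.

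The gap is in the ReLU step. The matrix fed to $\sigma$ is the pre-activation $\boldsymbol{X}=\boldsymbol{\tilde{A}}\boldsymbol{H}^{(l)}\boldsymbol{W}^{(l)}$, and your nonnegativity invariant does not survive multiplication by $\boldsymbol{W}^{(l)}$, whose entries have arbitrary sign; so $\boldsymbol{X}$ need not be entrywise nonnegative, and $\boldsymbol{E}\ge 0$ alone does not make $\boldsymbol{Y}^{*}=\boldsymbol{E}\boldsymbol{E}^{T}\boldsymbol{X}$ nonnegative (you would also need $\boldsymbol{E}^{T}\boldsymbol{X}\ge 0$). Worse, if $\boldsymbol{X}$ really were nonnegative then $\sigma(\boldsymbol{X})=\boldsymbol{X}$ and the step would be vacuous --- the entire content of the ReLU bound lives in precisely the case your invariant excludes. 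The correct argument, and the one in \cite{oono2019graph}, needs no positivity of the input: work column by column, write $\boldsymbol{v}=\boldsymbol{v}_{+}-\boldsymbol{v}_{-}$ with $\boldsymbol{v}_{\pm}\ge 0$ of disjoint supports so that $\sigma(\boldsymbol{v})=\boldsymbol{v}_{+}$ and $\|\boldsymbol{v}\|^{2}=\|\boldsymbol{v}_{+}\|^{2}+\|\boldsymbol{v}_{-}\|^{2}$, and compare the squared distances to $U$, namely $\|\boldsymbol{v}\|^{2}-\sum_{m}\langle\boldsymbol{e}_{m},\boldsymbol{v}\rangle^{2}$ versus $\|\boldsymbol{v}_{+}\|^{2}-\sum_{m}\langle\boldsymbol{e}_{m},\boldsymbol{v}_{+}\rangle^{2}$. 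Setting $a_{m}=\langle\boldsymbol{e}_{m},\boldsymbol{v}_{+}\rangle\ge 0$ and $b_{m}=\langle\boldsymbol{e}_{m},\boldsymbol{v}_{-}\rangle\ge 0$ (this is where the nonnegative basis is actually used), the desired inequality reduces to $\sum_{m}(b_{m}^{2}-2a_{m}b_{m})\le\|\boldsymbol{v}_{-}\|^{2}$, which follows from $a_{m}b_{m}\ge 0$ together with Bessel's inequality. With that substitution for your ReLU step, the rest of your proof composes correctly and yields the stated bound.
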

In the above lemma we assume $s\lambda<1$, which means $d_{\mathcal{M}}(\boldsymbol{H}^{(l)})$ will exponentially converge to $0$. We are now ready to demonstrate our main theorem for over-smoothing. Let $\mathcal{G}$ be a Graph Convolution Network and $\hat{\mathcal{G}}$ be $\mathcal{G}$ with the Global Self-attention mechianism, we have
\begin{theorem}
\label{thm:mainsmooth}
$\forall$ $l\in\mathbb{N}^{+}$, let $\boldsymbol{H}^{(l)}$ and $\boldsymbol{\hat{H}}^{(l)}$ denote the output of $\mathcal{G}$ and $\hat{\mathcal{G}}$ on the $l$-th layer. As $l\rightarrow \infty$, the convergence rate to $\mathcal{M}$ for $\hat{\mathcal{G}}$ is asymptotically slower than that of $\mathcal{G}$. i.e. $d_{\mathcal{M}}(\boldsymbol{\hat{H}}^{(l)})=\omega(d_{\mathcal{M}}(\boldsymbol{H}^{(l)}))$. Furthermore, there exists $\boldsymbol{H}^{(l)}$ on layer $l$ and the choice of $\gamma$ such that $d_{\mathcal{M}}(\boldsymbol{\hat{H}}^{(l+1)})>d_{\mathcal{M}}(\boldsymbol{H}^{(l+1)})$.
\end{theorem}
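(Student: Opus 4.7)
The plan is to recast each layer of $\hat{\mathcal{G}}$ as a vanilla GCN layer with a modified weight matrix, and then invoke \Cref{lem:gcnoriginal} with the enlarged maximum singular value. From \Cref{equ:convolutionoutput,equ:wTildeExplicit}, the GSA update reduces to $\sigma(\tilde{\boldsymbol{A}}\hat{\boldsymbol{H}}^{(l)}\tilde{\boldsymbol{W}})$ with $\tilde{\boldsymbol{W}}=(\boldsymbol{I}+\gamma\hat{\boldsymbol{P}})\boldsymbol{W}$, where I abbreviate $\hat{\boldsymbol{P}}:=(\boldsymbol{H}^{T}\boldsymbol{H})^{-1}\boldsymbol{H}^{T}\hat{\boldsymbol{C}}\boldsymbol{H}$. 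This is formally identical to the vanilla update with $\boldsymbol{W}$ replaced by $\tilde{\boldsymbol{W}}$, so \Cref{lem:gcnoriginal} applies to $\hat{\mathcal{G}}$ verbatim with $s$ replaced by $\tilde{s}:=\sup_{l}\sigma_{\max}(\tilde{\boldsymbol{W}}^{(l)})$. The entire argument then reduces to showing $\tilde{s}>s$.

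The first concrete step is to show that $\hat{\boldsymbol{P}}$ has strictly positive real spectrum. Under Assumption 2 the matrix $\boldsymbol{A}:=\boldsymbol{H}^{T}\boldsymbol{H}$ is Hermitian positive definite; under Assumption 3, $\hat{\boldsymbol{C}}$ is Hermitian positive definite, which combined with full column rank of $\boldsymbol{H}$ forces $\boldsymbol{M}:=\boldsymbol{H}^{T}\hat{\boldsymbol{C}}\boldsymbol{H}$ to be Hermitian positive definite as well. Consequently $\hat{\boldsymbol{P}}=\boldsymbol{A}^{-1}\boldsymbol{M}$ is similar to $\boldsymbol{A}^{-1/2}\boldsymbol{M}\boldsymbol{A}^{-1/2}$, which is Hermitian positive definite, so every eigenvalue $\mu_{i}(\hat{\boldsymbol{P}})$ is real and strictly positive. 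Hence $\rho(\boldsymbol{I}+\gamma\hat{\boldsymbol{P}})=1+\gamma\mu_{\max}(\hat{\boldsymbol{P}})>1$ for every $\gamma>0$.

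Next I would transfer this into a singular-value inequality for $\tilde{\boldsymbol{W}}$. Since $\sigma_{\max}(\tilde{\boldsymbol{W}})\ge\rho(\tilde{\boldsymbol{W}})$ and the factor $\boldsymbol{I}+\gamma\hat{\boldsymbol{P}}$ acts as expansion by at least $1+\gamma\mu_{\max}(\hat{\boldsymbol{P}})$ along a genuine eigendirection, I would combine these to obtain $\tilde{s}\ge(1+\gamma\mu_{\max}(\hat{\boldsymbol{P}}))\,s>s$. Plugging this into \Cref{lem:gcnoriginal} gives $d_{\mathcal{M}}(\hat{\boldsymbol{H}}^{(l)})\le(\tilde{s}\lambda)^{l}\,d_{\mathcal{M}}(\hat{\boldsymbol{H}}^{(0)})$ while $d_{\mathcal{M}}(\boldsymbol{H}^{(l)})=O((s\lambda)^{l})$, and the ratio $(\tilde{s}/s)^{l}\to\infty$ yields the $\omega$-asymptotic claim. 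For the one-step strict increase, I would pick a witness $\boldsymbol{H}^{(l)}\in\mathcal{M}$: by invariance of $\mathcal{M}$ under $\tilde{\boldsymbol{A}}$ and right-multiplication by $\boldsymbol{W}$, the vanilla step satisfies $d_{\mathcal{M}}(\boldsymbol{H}^{(l+1)})=0$, whereas the attention term $\gamma\boldsymbol{B}\boldsymbol{H}^{(l)}$ generically lies outside $\mathcal{M}$ (since $\hat{\boldsymbol{C}}$ need not preserve that subspace), and choosing $\gamma$ small enough ensures the perturbation survives the $\sigma$ nonlinearity, giving $d_{\mathcal{M}}(\hat{\boldsymbol{H}}^{(l+1)})>0$.

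The main obstacle is the singular-value comparison step, because $\hat{\boldsymbol{P}}$ is not itself Hermitian; in general a matrix with positive spectrum can have singular values strictly below its eigenvalues, so $\rho(\boldsymbol{I}+\gamma\hat{\boldsymbol{P}})>1$ does not \emph{automatically} imply $\sigma_{\min}(\boldsymbol{I}+\gamma\hat{\boldsymbol{P}})>1$ or $\sigma_{\max}(\tilde{\boldsymbol{W}})>\sigma_{\max}(\boldsymbol{W})$. The cleanest remedy is to examine the PSD ordering $\tilde{\boldsymbol{W}}^{T}\tilde{\boldsymbol{W}}-\boldsymbol{W}^{T}\boldsymbol{W}=\gamma\boldsymbol{W}^{T}(\hat{\boldsymbol{P}}+\hat{\boldsymbol{P}}^{T})\boldsymbol{W}+\gamma^{2}\boldsymbol{W}^{T}\hat{\boldsymbol{P}}^{T}\hat{\boldsymbol{P}}\boldsymbol{W}$, which is positive semidefinite whenever $\hat{\boldsymbol{P}}+\hat{\boldsymbol{P}}^{T}\succeq 0$; the Hermitian positive definite approximation of $\hat{\boldsymbol{C}}$ stipulated in the paper's assumptions is precisely what provides this symmetrization, closing the gap and yielding $\tilde{s}>s$.
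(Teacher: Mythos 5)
Your overall architecture is the same as the paper's: absorb the attention term into a modified weight matrix $\boldsymbol{\tilde{W}}=(\boldsymbol{I}+\gamma\boldsymbol{\hat{P}})\boldsymbol{W}$ as in \Cref{equ:wTildeExplicit}, show its top singular value strictly exceeds that of $\boldsymbol{W}$, and then run \Cref{lem:gcnoriginal} for the $\omega$-claim and a one-step argument for the strict increase. Two of your sub-arguments genuinely differ from the paper's. First, you certify the positive spectrum of $\boldsymbol{\hat{P}}$ by similarity to $(\boldsymbol{H}^{T}\boldsymbol{H})^{-1/2}\boldsymbol{H}^{T}\boldsymbol{\hat{C}}\boldsymbol{H}(\boldsymbol{H}^{T}\boldsymbol{H})^{-1/2}$, which is clean and does not need the invertibility of $\boldsymbol{H}\boldsymbol{H}^{T}$; the paper instead claims in \Cref{lem:hermitiansigularvalue} that $\boldsymbol{Q}=\gamma\boldsymbol{\hat{P}}$ is itself Hermitian (via an algebraic telescoping that does use that invertibility assumption), applies the Horn eigenvalue inequality to get $\lambda_{\min}(\boldsymbol{P})>1$, and reads off $\sigma_{\min}(\boldsymbol{P})>1$ from Hermitian-ness before feeding it into \Cref{prop:2normanalysis}. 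Second, for the one-step claim the paper does not use a subspace witness: it invokes Proposition 4 of \cite{oono2019graph} and chooses $\gamma$ so that $s^{(l)}\lambda<1<\hat{s}^{(l)}\lambda$, which is more concrete than your argument that $\gamma\boldsymbol{B}\boldsymbol{H}^{(l)}$ ``generically'' escapes $\mathcal{M}$ — that word is doing real work and would need to be discharged.

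The substantive issue is the one you yourself flag: converting a positive spectrum of $\boldsymbol{\hat{P}}$ into $\tilde{s}>s$. Your proposed remedy — that $\boldsymbol{\hat{P}}+\boldsymbol{\hat{P}}^{T}\succeq 0$ follows because $\boldsymbol{\hat{C}}$ is Hermitian positive definite — is not correct. Writing $\boldsymbol{\hat{P}}=\boldsymbol{A}^{-1}\boldsymbol{M}$ with $\boldsymbol{A}=\boldsymbol{H}^{T}\boldsymbol{H}\succ 0$ and $\boldsymbol{M}=\boldsymbol{H}^{T}\boldsymbol{\hat{C}}\boldsymbol{H}\succ 0$, the symmetric part of a product of two positive definite matrices need not be positive semidefinite: for instance $\boldsymbol{A}^{-1}=\mathrm{diag}(100,1)$ and $\boldsymbol{M}=\bigl(\begin{smallmatrix}1&0.9\\0.9&1\end{smallmatrix}\bigr)$ give $\boldsymbol{A}^{-1}\boldsymbol{M}$ whose symmetric part has negative determinant. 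So positivity of the spectrum of $\boldsymbol{\hat{P}}$ does not by itself yield the PSD ordering $\boldsymbol{\tilde{W}}^{T}\boldsymbol{\tilde{W}}\succeq\boldsymbol{W}^{T}\boldsymbol{W}$, and the gap you identified remains open in your write-up. The paper closes it only by asserting that $\boldsymbol{\hat{P}}$ is Hermitian outright, so that its positive eigenvalues coincide with its singular values and $\sigma_{\min}(\boldsymbol{I}+\gamma\boldsymbol{\hat{P}})>1$ follows; if you grant that assertion your PSD-ordering detour is unnecessary, and if you do not, your proof as written does not close.
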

Recall from \Cref{equ:wTildeExplicit}, with a slight abuse of notation, let $\boldsymbol{W}$ and $\boldsymbol{\tilde{W}}$ be the weight matrices for $\mathcal{G}$ and $\hat{\mathcal{G}}$ on any layer. The proof of \Cref{thm:mainsmooth} crucially relies on the fact that for each layer, the maximum singular value of $\boldsymbol{\tilde{W}}$ is larger than its counterpart in $\boldsymbol{W}$. En route to proving \Cref{thm:mainsmooth}, we will establish the following results. We use $\lambda_{\min}(\cdot)$ and $\sigma_{\min}(\cdot)$ to denote the minimum eigenvalue and singular value.
\begin{lemma}
\label{lem:hermitiansigularvalue}
The minimum eigenvalue and singular value of matrix $(\boldsymbol{I} + \gamma(\boldsymbol{H}^{T}\boldsymbol{H})^{-1}{\boldsymbol{H}}^{T}\boldsymbol{\hat{C}}\boldsymbol{H})$ in \Cref{equ:wTildeExplicit} are greater than 1. I.e. let $\boldsymbol{P} := (\boldsymbol{I} + \gamma(\boldsymbol{H}^{T}\boldsymbol{H})^{-1}{\boldsymbol{H}}^{T}\boldsymbol{\hat{C}}\boldsymbol{H})$, we have $ \lambda_{\min}(\boldsymbol{P})>1$ and $\sigma_{\min}(\boldsymbol{P})>1$.
\end{lemma}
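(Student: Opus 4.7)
The plan is to view $\boldsymbol{P}$ as the identity perturbed by $\gamma$ times a product of two Hermitian positive-definite matrices and then exploit a standard similarity argument. Set $\boldsymbol{S}:=\boldsymbol{H}^{T}\boldsymbol{H}$ and $\boldsymbol{T}:=\boldsymbol{H}^{T}\boldsymbol{\hat{C}}\boldsymbol{H}$, so that $\boldsymbol{P}=\boldsymbol{I}+\gamma\boldsymbol{S}^{-1}\boldsymbol{T}$. The full-column-rank assumption makes $\boldsymbol{S}$ Hermitian positive definite, and since $\boldsymbol{\hat{C}}$ is Hermitian positive definite, the congruence $\boldsymbol{T}=\boldsymbol{H}^{T}\boldsymbol{\hat{C}}\boldsymbol{H}$ is also Hermitian positive definite. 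In particular $\boldsymbol{S}$ admits a unique Hermitian positive-definite square root $\boldsymbol{S}^{1/2}$.

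For the eigenvalue claim, the key observation is that $\boldsymbol{S}^{-1}\boldsymbol{T}$ is similar to a Hermitian positive-definite matrix via
\[
\boldsymbol{S}^{1/2}\bigl(\boldsymbol{S}^{-1}\boldsymbol{T}\bigr)\boldsymbol{S}^{-1/2}=\boldsymbol{S}^{-1/2}\boldsymbol{T}\boldsymbol{S}^{-1/2},
\]
and the right-hand side is Hermitian positive definite as a congruence of $\boldsymbol{T}$. Similarity preserves the spectrum, so $\boldsymbol{S}^{-1}\boldsymbol{T}$ has strictly positive real eigenvalues. Combined with $\gamma>0$ this gives $\lambda_{\min}(\boldsymbol{P})=1+\gamma\,\lambda_{\min}(\boldsymbol{S}^{-1}\boldsymbol{T})>1$.

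The singular-value claim is the main obstacle, since $\boldsymbol{S}^{-1}$ and $\boldsymbol{T}$ do not commute in general, so $\boldsymbol{P}$ is typically non-Hermitian and $\sigma_{\min}(\boldsymbol{P})$ need not coincide with $\lambda_{\min}(\boldsymbol{P})$. My plan is to work with the identity $\sigma_{\min}(\boldsymbol{P})^{2}=\lambda_{\min}(\boldsymbol{P}^{T}\boldsymbol{P})$ and the expansion
\[
\boldsymbol{P}^{T}\boldsymbol{P}=\boldsymbol{I}+\gamma\,(\boldsymbol{M}+\boldsymbol{M}^{T})+\gamma^{2}\,\boldsymbol{M}^{T}\boldsymbol{M},\qquad \boldsymbol{M}:=\boldsymbol{S}^{-1}\boldsymbol{T}.
\]
The quadratic piece $\gamma^{2}\boldsymbol{M}^{T}\boldsymbol{M}$ is Hermitian positive definite because $\boldsymbol{M}$ is invertible with strictly positive real eigenvalues. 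The cross term $\gamma(\boldsymbol{M}+\boldsymbol{M}^{T})$ is the delicate part: it need not be positive semidefinite on its own. To handle it I would pass to the $\boldsymbol{S}$-inner product $\langle x,y\rangle_{\boldsymbol{S}}=x^{T}\boldsymbol{S}y$ in which $\boldsymbol{M}$ is self-adjoint, or equivalently substitute $x=\boldsymbol{S}^{-1/2}y$ in the quadratic form $x^{T}\boldsymbol{P}^{T}\boldsymbol{P}x$, and then bound the cross contribution from below using the positivity of $\boldsymbol{S}^{-1/2}\boldsymbol{T}\boldsymbol{S}^{-1/2}$ so that it is absorbed by the other two terms. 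Verifying the inequality $\boldsymbol{P}^{T}\boldsymbol{P}\succ\boldsymbol{I}$ through this change-of-basis manipulation, without needing additional hypotheses on $\boldsymbol{H}$ beyond full column rank, is where I expect the real technical effort to lie.
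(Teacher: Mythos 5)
Your argument for the eigenvalue half is correct, complete, and takes a genuinely different route from the paper's. The paper instead asserts that $\boldsymbol{Q}:=\gamma(\boldsymbol{H}^{T}\boldsymbol{H})^{-1}\boldsymbol{H}^{T}\boldsymbol{\hat{C}}\boldsymbol{H}$ is itself Hermitian, via a chain of identities whose key step (replacing $\boldsymbol{\hat{C}}\boldsymbol{H}(\boldsymbol{H}^{T}\boldsymbol{H})$ by $(\boldsymbol{H}\boldsymbol{H}^{T}\boldsymbol{\hat{C}})\boldsymbol{H}$) silently requires $\boldsymbol{\hat{C}}$ to commute with $\boldsymbol{H}\boldsymbol{H}^{T}$; it then applies the eigenvalue inequality for sums of Hermitian matrices and obtains the singular-value claim for free from Hermitian-ness. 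Your similarity argument --- $\boldsymbol{S}^{-1}\boldsymbol{T}$ is similar to $\boldsymbol{S}^{-1/2}\boldsymbol{T}\boldsymbol{S}^{-1/2}\succ 0$, hence has real positive spectrum and $\lambda_{\min}(\boldsymbol{P})=1+\gamma\lambda_{\min}(\boldsymbol{S}^{-1}\boldsymbol{T})>1$ --- needs no commutativity and is the sound version of that half. You also correctly refuse the shortcut of identifying $\sigma_{\min}$ with $\lambda_{\min}$ for a non-normal matrix.

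However, the singular-value half is a genuine gap, and the plan you sketch cannot close it: under only the stated assumptions ($\boldsymbol{H}$ of full column rank with $\boldsymbol{H}\boldsymbol{H}^{T}$ invertible, $\boldsymbol{\hat{C}}$ Hermitian positive definite, $\gamma>0$), the claim $\sigma_{\min}(\boldsymbol{P})>1$ is simply false. Take $\boldsymbol{H}=\mathrm{diag}(1,\tfrac{1}{10})$ and $\boldsymbol{\hat{C}}=\bigl(\begin{smallmatrix}1&9\\9&100\end{smallmatrix}\bigr)$, which is Hermitian positive definite. Then $\boldsymbol{S}^{-1}=\mathrm{diag}(1,100)$, $\boldsymbol{T}=\bigl(\begin{smallmatrix}1&0.9\\0.9&1\end{smallmatrix}\bigr)$, and $\boldsymbol{M}:=\boldsymbol{S}^{-1}\boldsymbol{T}=\bigl(\begin{smallmatrix}1&0.9\\90&100\end{smallmatrix}\bigr)$, whose symmetrization $\boldsymbol{M}+\boldsymbol{M}^{T}=\bigl(\begin{smallmatrix}2&90.9\\90.9&200\end{smallmatrix}\bigr)$ has negative determinant and hence a negative eigenvalue $\nu<0$. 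For the corresponding unit eigenvector $v$ one has $\|\boldsymbol{P}v\|^{2}=1+\gamma\nu+\gamma^{2}\|\boldsymbol{M}v\|^{2}<1$ for all $0<\gamma<-\nu/\|\boldsymbol{M}v\|^{2}$, so $\sigma_{\min}(\boldsymbol{P})<1$ --- and small positive $\gamma$ is exactly the relevant regime, since $\gamma$ is a trainable parameter initialized at $0$. In other words, the product of two Hermitian positive definite matrices can have an indefinite symmetric part, so the cross term $\gamma(\boldsymbol{M}+\boldsymbol{M}^{T})$ cannot be absorbed by $\boldsymbol{I}$ and $\gamma^{2}\boldsymbol{M}^{T}\boldsymbol{M}$ for small $\gamma$, no matter what inner product you pass to (the quantity $\sigma_{\min}$ is tied to the Euclidean norm and is not invariant under your change of basis). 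The second assertion of the lemma therefore requires an additional hypothesis forcing $\boldsymbol{P}$ to be normal, e.g.\ that $\boldsymbol{\hat{C}}$ commutes with $\boldsymbol{H}\boldsymbol{H}^{T}$ --- which is precisely the assumption the paper's algebra uses implicitly when it concludes $\boldsymbol{Q}$ is Hermitian. Your instinct about where the difficulty lies was exactly right; the resolution is that the difficulty is not surmountable at this level of generality, so you should either add that hypothesis or weaken the conclusion.
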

\begin{proof}
We first prove the matrix $\boldsymbol{Q} :=\gamma(\boldsymbol{H}^{T}\boldsymbol{H})^{-1}{\boldsymbol{H}}^{T}\boldsymbol{\hat{C}}\boldsymbol{H}$ is Hermitian and  positive definite. According to Assumption 2, since $\boldsymbol{H}^{T}\boldsymbol{H}$ is Hermitian and positive definite, $(\boldsymbol{H}^{T}\boldsymbol{H})^{-1}$ is Hermitian and positive definite; and since $\boldsymbol{\hat{C}}$ is Hermitian and positive definite, ${\boldsymbol{H}}^{T}\boldsymbol{\hat{C}}\boldsymbol{H}$ is Hermitian and positive definite. Thus, $\boldsymbol{Q}$ is positive definite by multiplication. We can show $\boldsymbol{Q}$ is Hermitian by showing $\boldsymbol{\bar{Q}}$ is Hermitian:
\begin{align*}
\boldsymbol{\bar{Q}} &= (\boldsymbol{H}^{T}\boldsymbol{H})^{-1}{\boldsymbol{H}}^{T}\boldsymbol{\hat{C}}\boldsymbol{H}\\
&= (\boldsymbol{H}^{T}\boldsymbol{H})^{-1}{\boldsymbol{H}}^{T}(\boldsymbol{H}\boldsymbol{H}^{T})(\boldsymbol{H}\boldsymbol{H}^{T})^{-1}\boldsymbol{\hat{C}}\boldsymbol{H}\\
&=\boldsymbol{H}^{T}(\boldsymbol{H}\boldsymbol{H}^{T})^{-1}\boldsymbol{\hat{C}}\boldsymbol{H}\\
&=\boldsymbol{H}^{T}\boldsymbol{\hat{C}}\boldsymbol{\hat{C}}^{-1}(\boldsymbol{H}\boldsymbol{H}^{T})^{-1}\boldsymbol{\hat{C}}\boldsymbol{H}\\
&=\boldsymbol{H}^{T}\boldsymbol{\hat{C}}(\boldsymbol{H}\boldsymbol{H}^{T}\boldsymbol{\hat{C}})^{-1}\boldsymbol{\hat{C}}\boldsymbol{H}\\
&=\boldsymbol{H}^{T}\boldsymbol{\hat{C}}(\boldsymbol{H}\boldsymbol{H}^{T}\boldsymbol{\hat{C}})^{-1}\boldsymbol{\hat{C}}\boldsymbol{H}(\boldsymbol{H}^{T}\boldsymbol{H})(\boldsymbol{H}^{T}\boldsymbol{H})^{-1}\\
&=\boldsymbol{H}^{T}\boldsymbol{\hat{C}}(\boldsymbol{H}\boldsymbol{H}^{T}\boldsymbol{\hat{C}})^{-1}(\boldsymbol{H}\boldsymbol{H}^{T}\boldsymbol{\hat{C}})\boldsymbol{H}(\boldsymbol{H}^{T}\boldsymbol{H})^{-1}\\
&=\boldsymbol{H}^{T}\boldsymbol{\hat{C}}\boldsymbol{H}(\boldsymbol{H}^{T}\boldsymbol{H})^{-1} = \boldsymbol{\bar{Q}}^{T}.
\end{align*}
Since $\gamma>0$, we can conclude that $\boldsymbol{Q}$ is Hermitian, and its minimum eigenvalue $\lambda_{\min}(\boldsymbol{Q})>0$. Now we introduce a theorem in \cite{Knutson2001HoneycombsAS}, which says that if $\boldsymbol{A}$ and $\boldsymbol{B}$ are Hermitian matrices with eigenvalues denoted in descending order. i.e.  $\lambda_{1}(\boldsymbol{A})>\lambda_{2}(\boldsymbol{A})>\cdots>\lambda_{n}(\boldsymbol{A})$ and $\lambda_{1}(\boldsymbol{B})>\lambda_{2}(\boldsymbol{B})>\cdots>\lambda_{n}(\boldsymbol{B})$, and if $\boldsymbol{A}+\boldsymbol{B}=\boldsymbol{C}$ with the eigenvalue of $\boldsymbol{C}$ denoted in descending order. i.e.  $\lambda_{1}(\boldsymbol{C})>\lambda_{2}(\boldsymbol{C})>\cdots>\lambda_{n}(\boldsymbol{C})$, then there is inequality:
\begin{equation*}
\lambda_{n-i-j}(\boldsymbol{C}) \geq \lambda_{n-i}(\boldsymbol{A})+\lambda_{n-j}(\boldsymbol{B}).
\end{equation*}
If we set $i=0,j=0$ for the above inequality, and consider the condition that identity matrix is Hermitian and with eigenvalue $1$, then we will have:
\begin{align*}
\lambda_{\min}(\boldsymbol{P}) \geq \lambda_{\min}(\boldsymbol{I})+\lambda_{\min}(\boldsymbol{Q})>1 .
\end{align*}
And since $\boldsymbol{P}$ is obviously Hermitian, which means its smallest singular value should be the
absolute value of its smallest eigenvalue. Therefore, we have $\sigma_{\min}(\boldsymbol{P})>1$.
\end{proof}
Now we give the following proposition to help compare the singular values:
\begin{prop}
\label{prop:2normanalysis}
Let $\|\cdot\|$ denotes the 2-norm, and given two matrices $\boldsymbol{A}$ and $\boldsymbol{B}$ where $\boldsymbol{A}$ is a square matrix, then the inequalities $\sigma_{\min}(\boldsymbol{A})\|\boldsymbol{B}\|\leq\|\boldsymbol{A}\boldsymbol{B}\|\leq\sigma_{\max}(\boldsymbol{A})\|\boldsymbol{B}\|$ hold.
\end{prop}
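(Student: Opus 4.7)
The plan is to handle the two inequalities separately, using only the variational characterization of singular values together with the fact that the matrix 2-norm is exactly the largest singular value.

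First I would set up notation: write $\|\boldsymbol{M}\| = \sup_{\|x\|=1}\|\boldsymbol{M}x\|$ for the spectral norm, and recall that this equals $\sigma_{\max}(\boldsymbol{M})$, and that for square $\boldsymbol{A}$ the SVD immediately yields $\sigma_{\min}(\boldsymbol{A})\|x\| \leq \|\boldsymbol{A}x\| \leq \sigma_{\max}(\boldsymbol{A})\|x\|$ for every vector $x$ (expand $x$ in the right singular basis and observe that each component is scaled by some $\sigma_i$ between $\sigma_{\min}$ and $\sigma_{\max}$).

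For the upper bound, I would invoke submultiplicativity of the operator norm: $\|\boldsymbol{A}\boldsymbol{B}\| \leq \|\boldsymbol{A}\|\,\|\boldsymbol{B}\| = \sigma_{\max}(\boldsymbol{A})\,\|\boldsymbol{B}\|$. If a self-contained derivation is preferred, this follows in one line by writing $\|\boldsymbol{A}\boldsymbol{B}x\| \leq \sigma_{\max}(\boldsymbol{A})\|\boldsymbol{B}x\| \leq \sigma_{\max}(\boldsymbol{A})\|\boldsymbol{B}\|\|x\|$ and then taking the supremum over unit vectors $x$.

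For the lower bound, I would pick a unit vector $v$ that realizes the spectral norm of $\boldsymbol{B}$, that is, $\|v\|=1$ and $\|\boldsymbol{B}v\| = \|\boldsymbol{B}\|$ (such a $v$ exists because the unit sphere is compact and $x \mapsto \|\boldsymbol{B}x\|$ is continuous). Then the chain
\begin{equation*}
\|\boldsymbol{A}\boldsymbol{B}\| \;\geq\; \|\boldsymbol{A}\boldsymbol{B}v\| \;=\; \|\boldsymbol{A}(\boldsymbol{B}v)\| \;\geq\; \sigma_{\min}(\boldsymbol{A})\,\|\boldsymbol{B}v\| \;=\; \sigma_{\min}(\boldsymbol{A})\,\|\boldsymbol{B}\|
\end{equation*}
closes the argument, where the first inequality is the definition of the spectral norm as a supremum, and the second is the pointwise lower bound from the SVD of the square matrix $\boldsymbol{A}$. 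There is essentially no obstacle here; the only subtle point worth flagging is that squareness of $\boldsymbol{A}$ is what guarantees the pointwise inequality $\|\boldsymbol{A}y\| \geq \sigma_{\min}(\boldsymbol{A})\|y\|$ (for rectangular $\boldsymbol{A}$ with more columns than rows, $\sigma_{\min}$ as defined via the SVD corresponds only to a subset of directions and the bound can fail on the nullspace), so I would note this dependence explicitly when applying the inequality.
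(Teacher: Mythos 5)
Your proof is correct. Note that the paper does not actually prove this proposition---it only remarks that the proof ``can be found in most matrix analysis textbooks''---and your argument is precisely the standard one those references give: submultiplicativity of the spectral norm for the upper bound, and for the lower bound the pointwise estimate $\|\boldsymbol{A}y\|\geq\sigma_{\min}(\boldsymbol{A})\|y\|$ applied to a unit vector attaining $\|\boldsymbol{B}\|$. Your closing caveat that squareness of $\boldsymbol{A}$ is what guarantees that pointwise lower bound is also accurate and worth stating, since the inequality can indeed fail on the nullspace of a wide rectangular matrix.
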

The proof of proposition \Cref{prop:2normanalysis} can be found in most matrix analysis textbooks. Now we can demonstrate the relation between the singular values of $\boldsymbol{\tilde{W}}$ and $\boldsymbol{W}$.
\begin{lemma}
\label{lem:singularvaluecomparison}
$\forall$ $l\in\mathbb{N}^{+}$, let $s^{(l)}$ and $\tilde{s}^{(l)}$ be the maximum singular values of $\boldsymbol{W}^{(l)}$ and $\boldsymbol{\tilde{W}}^{(l)}$, respectively. Then there is $\tilde{s}^{(l)}>s^{(l)}$.
\end{lemma}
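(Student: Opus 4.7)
The plan is to combine the two preceding results in a very short chain of inequalities. By \Cref{equ:wTildeExplicit}, on any layer we may write $\boldsymbol{\tilde{W}}^{(l)} = \boldsymbol{P}^{(l)} \boldsymbol{W}^{(l)}$, where $\boldsymbol{P}^{(l)} = \boldsymbol{I} + \gamma (\boldsymbol{H}^{T}\boldsymbol{H})^{-1} \boldsymbol{H}^{T} \boldsymbol{\hat{C}} \boldsymbol{H}$ is a square matrix (its dimension matches the input feature dimension of $\boldsymbol{W}^{(l)}$, so the product is well-defined). The maximum singular value of a matrix equals its spectral (2-)norm, so $\tilde{s}^{(l)} = \|\boldsymbol{\tilde{W}}^{(l)}\|$ and $s^{(l)} = \|\boldsymbol{W}^{(l)}\|$.

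Next, I would apply \Cref{prop:2normanalysis} with $\boldsymbol{A} = \boldsymbol{P}^{(l)}$ (square) and $\boldsymbol{B} = \boldsymbol{W}^{(l)}$ to get the lower bound
\begin{equation*}
\tilde{s}^{(l)} = \|\boldsymbol{P}^{(l)} \boldsymbol{W}^{(l)}\| \geq \sigma_{\min}(\boldsymbol{P}^{(l)}) \cdot \|\boldsymbol{W}^{(l)}\| = \sigma_{\min}(\boldsymbol{P}^{(l)}) \cdot s^{(l)} .
\end{equation*}
Finally, invoking \Cref{lem:hermitiansigularvalue}, which gives $\sigma_{\min}(\boldsymbol{P}^{(l)}) > 1$, and using that $s^{(l)} > 0$ (the weight matrix is nontrivial), the chain closes as $\tilde{s}^{(l)} \geq \sigma_{\min}(\boldsymbol{P}^{(l)}) \cdot s^{(l)} > s^{(l)}$, which is exactly the statement of the lemma.

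There is essentially no hard technical step here; the whole content of the argument has already been absorbed into \Cref{lem:hermitiansigularvalue} and \Cref{prop:2normanalysis}. The one point that warrants an explicit sentence in the write-up is the dimensional bookkeeping that guarantees $\boldsymbol{P}^{(l)}$ is square (so that Proposition \ref{prop:2normanalysis} applies with $\boldsymbol{A} = \boldsymbol{P}^{(l)}$) and the observation that $s^{(l)} > 0$, which is needed to conclude a strict rather than weak inequality after multiplying by $s^{(l)}$. Beyond these minor checks, the proof is an immediate composition of the two preceding results.
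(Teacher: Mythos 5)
Your proposal is correct and follows exactly the same route as the paper's proof: write $\boldsymbol{\tilde{W}}^{(l)} = \boldsymbol{P}\boldsymbol{W}^{(l)}$, lower-bound the 2-norm via \Cref{prop:2normanalysis}, and conclude with $\sigma_{\min}(\boldsymbol{P})>1$ from \Cref{lem:hermitiansigularvalue}. Your extra remarks on $\boldsymbol{P}$ being square and $s^{(l)}>0$ are minor bookkeeping the paper leaves implicit, not a different argument.
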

\begin{proof}
The proof follows a simple combination of the results of lemma \Cref{lem:hermitiansigularvalue} and proposition \Cref{prop:2normanalysis}. Let $||\cdot||$ be a 2-norm, then one will have:\\
\begin{align*}
\tilde{s}^{(l)} &= \|\boldsymbol{\tilde{W}}^{(l)}\|\\
&=\|(\boldsymbol{I} + \gamma(\boldsymbol{H}^{T}\boldsymbol{H})^{-1}{\boldsymbol{H}}^{T}\boldsymbol{\hat{C}}\boldsymbol{H})\boldsymbol{W}^{(l)}\|\\
& \geq \sigma_{\min}(\boldsymbol{I} + \gamma(\boldsymbol{H}^{T}\boldsymbol{H})^{-1}{\boldsymbol{H}}^{T}\boldsymbol{\hat{C}}\boldsymbol{H}) \|\boldsymbol{W}^{(l)}\| \tag{proposition \Cref{prop:2normanalysis}}\\
&> \|\boldsymbol{W}^{(l)}\| = s^{(l)}\tag{lemma \Cref{lem:hermitiansigularvalue}}
\end{align*}
\end{proof}
\noindent
\textbf{Proof of \Cref{thm:mainsmooth}}\\
The asymptotic relation on convergence rate can be shown straightforwardly by taking the limit of the rate in lemma \Cref{lem:gcnoriginal}. Let $s=\sup_{l \in \mathbb{N}^{+}} s_{l}$ and $\hat{s}=\sup_{l \in \mathbb{N}^{+}} \hat{s}_{l}$ be the largest singular values among all layers, one can get:
\begin{align*}
\lim_{l \to \infty} \frac{d_{\mathcal{M}}(\boldsymbol{\hat{H}}^{(l)})}{d_{\mathcal{M}}(\boldsymbol{H}^{(l)})} = \lim_{l \to \infty} (\frac{\hat{s}}{s})^{l} = \infty \tag{$\frac{\hat{s}}{s}>1$}
\end{align*}
The proof of the second part of the theorem relies on the Proposition 4 of \cite{oono2019graph}, which stated that for some $\boldsymbol{H}$, if $s^{(l)}\lambda>1$, we will have $d_{\mathcal{M}}(\boldsymbol{H}^{(l+1)})>d_{\mathcal{M}}(\boldsymbol{H}^{(l)})$. Thus, for layer $l$, if we choose $\gamma$ so that $s^{(l)}\lambda<1$ and $\hat{s}^{(l)}\lambda>1$, then there exists input $\boldsymbol{H}^{(l)}=\boldsymbol{\hat{H}}^{(l)}$ such that:
\begin{align*}
d_{\mathcal{M}}(\boldsymbol{H}^{(l+1)}) &\leq (s^{(l)}\lambda)d_{\mathcal{M}}(\boldsymbol{H}^{(l)})\\
&<(\hat{s}^{(l)}\lambda)d_{\mathcal{M}}(\boldsymbol{\hat{H}}^{(l+1)})\\
&<d_{\mathcal{M}}(\boldsymbol{\hat{H}}^{(l+1)})
\end{align*}
\section{Empirical Analysis for the GSA Mechanism on GCNs}
\label{sec:Experiments}
In this section, we empirically evaluate the impact of the GSA mechanism on GCNs in terms of the expressive power and the abilities to mitigate overfitting and over-smoothing. We organize the layers in the same way of \Cref{equ:layergraphconvolution}, and simply put it over the plain GCNs. For a convinient notation, we name the above model as \emph{GSA-GCN} in this section. We evaluate its performance with multiple benchmark datasets on two tasks: (I) node classification with citation network datasets: Cora, Citeseer and Pubmed \cite{sen2008collective}, aiming to classify academic papers into various subjects, (II) graph classification on the COIL-RAG dataset. We split the experiments into three parts to show the respective properties of the GSA-GCN. For the node classification task, we firstly apply the GSA-GCN to both supervised and semi-supervised node classifications. The results demonstrate enhanced accuracy and overfitting resistance for the GSA-GCN. Secondly, to corroborate the theorem for the GSA mechanism to mitigate over-smoothing, we compare the training accuracy for GCNs and GSA-GCNs as the number of layers goes deep. And finally, we extend the comparison between GSA-GCNs and plain GCNs to graph classification task with the COIL-RAG dataset to illustrate the preferable expressive power of GSA-GCNs beyond classical node classification. The metadata of aforementioned datasets are illustrated in table \Cref{tab:datasets}.\par

\begin{table}[!h]
\caption{Datasets used in the Experiments} \label{tab:datasets}
\centering
\resizebox{0.6\textwidth}{!}{
\begin{tabularx}{10cm}{ c c c c c c }
 \toprule[1.2pt]
\textbf{Dataset} & \textbf{Graphs} & \textbf{Nodes} & \textbf{Edges} & \textbf{Classes} & \textbf{Features} \\\midrule[0.8pt]
Cora & 1 & 2708 & 5429 & 7 & 1433 \\ 
Citeseer& 1 & 3327 & 4732 & 6 & 3703 \\ 
Pubmed & 1 & 19717 & 44338 & 3 & 500 \\ 
COIL-RAG & 3900 & 3.01 & 3.02& 100 (graph) & 64\\
\bottomrule[1.2pt]
\end{tabularx}}
\end{table}
\subsection{Semi- and Full-supervised Node Classification}
The semi-supervised experiment follows the setup in \cite{kipf2016semi}, and the full-supervised experiment follows the practice in \cite{rong2020dropedge,chen2018fastgcn}. We compare the performance of GSA-GCN with several recently-proposed compelling models based on GCNs, and the two-layer GCN backbone is considered as baseline unless otherwise specified. For semi-supervised learning, only a marginal portion of labeled instances are used in training process, thus it yields weaker performance than full-supervised learning. \Cref{tab:semi_node_class} demonstrates test accuracy on three datasets by GCNs \cite{kipf2016semi}, Graph Attention Networks \cite{velivckovic2017graph}, Jumping Knowledge Networks \cite{xu2018representation} and DropEdge-GCN \cite{rong2020dropedge}. We also summarize previous state-of-the-art testing accuracy on three datasets with full-supervised learning in \Cref{tab:full_node_class}. The performances of comparison models are either as reported in original paper or obtained by fine-tuning to the best performance as we are able to achieve.

\begin{table}[!h]
\begin{center}
{\caption{Semi-supervised Node Classification Accuracy(\%)}\label{tab:semi_node_class}}
\begin{tabular}{c c c c}
 Model & Cora & Citeseer & Pubmed \\\midrule[0.8pt]
 GCN & 81.5 & 70.3 & 79.0\\ \hline
 GAT & 83.0 & 72.5 & 79.0 \\ \hline
 JK-Net (4) & 80.2 & 68.7 & 78.0 \\  \hline
 DropEdge-GCN & 82.8 & 72.3 & 79.6\\ \hline
 GSA-GCN & \textbf{83.3} & \textbf{72.9}& \textbf{80.1}\\ \bottomrule[1.2pt]
\end{tabular}
\end{center}
\end{table}

\begin{table}[!h]
\begin{center}
{\caption{Full-supervised Node Classification Accuracy(\%)}\label{tab:full_node_class}}
\begin{tabular}{c c c c}
 Model & Cora & Citeseer & Pubmed \\\midrule[0.8pt]
 GCN & 86.1 & 75.9 & 90.2\\  \hline
 GAT & 86.4 & 76.6 & OOM \\ \hline
 JK-Net & 86.9 & 78.3 & 90.5 \\ \hline
 DropEdge-GCN & 86.5 & 78.7 & \textbf{91.2}\\ \hline 
 GSA-GCN & \textbf{88.2} & \textbf{79.1}& 89.4 \\ \bottomrule[1.2pt]
\end{tabular}
\end{center}
\end{table}

As illustrated in \Cref{tab:semi_node_class} and \Cref{tab:full_node_class}, the GSA-GCN model outperforms other advanced methods on almost all the datasets for both tasks. The only exception is the full-supervised classification on Pubmed dataset. Notice some other state-of-the-art methods apart from the ones listed in the tables \emph{may} have reported a higher accuracy than GSA-GCN. Nevertheless, our intention is \emph{not} to show the superiority of the GSA-GCN model, but to verify the \emph{merits of the GSA mechanism}. To this end, the results are sufficiently convincing as the simple GSA mechanism on plain GCNs can lead to performances surpassing the listed advanced methods, which are carefully-designed and complicated. In summary, consistent with the intuition, the GSA mechanism indeed introduce compelling expressive power to the GCNs. Furthermore, the significant performance improvement for the semi-supervised classification task ratifies the capacities for the GSA mechanism to mitigate overfitting. 

\subsection{Over-smoothing}
To provide empirical evidences for \Cref{thm:mainsmooth}, we show the changes of training accuracy with an increasing number of layers on plain GCN and GSA-GCN. Two fully-supervised node classification datasets (Cora and Citeseer) are employed in this part of the experiment. Notice that since over-smoothing will obstruct the accuracy optimization on both training and testing sets, demonstrating the results on the \emph{training} accuracy alone is sufficient for our purpose.\par

The results can be summarized in \Cref{fig:cora_oversmoothing} and \Cref{fig:citeseer_oversmoothing}. From the figures, it can be observed that GSA-GCN can consistently achieve a higher training accuracy than its plain GCN counterpart. Moreover, as the graph network deepens to a further extent, the gap becomes more significant. We observe from the training dynamic that the training accuracy of plain GCN for deep models typically stuck at a low level after few epochs, while the curves in GSA-GCN illustrate a fluctuating pattern although it does not converge to a high value.

\begin{figure}[!h]
\centering
\includegraphics[width=0.45\textwidth]{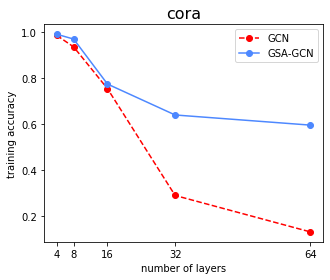}
\caption{\label{fig:cora_oversmoothing} Over-smoothing with model depth on Cora}
\end{figure}
\par

\begin{figure}[!h]
\centering
\includegraphics[width=0.45\textwidth]{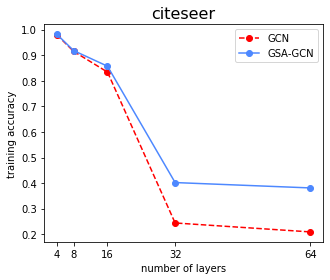}
\caption{\label{fig:citeseer_oversmoothing} Over-smoothing with model depth on Citeseer}
\end{figure}
\par

\subsection{Graph Classification}
Limited by space, for the graph classification task, we only illustrate the results on the COIL-RAG dataset. We tested the results on plain and GSA-GCN, and referred an external result from \cite{atamnaSPIGCN}. Our read-out layer for graph classification is based on summation pooling, and the weight regularization is set to $1e-4$. The results are summarized in \Cref{tab:graphclassification}. 
\begin{table}[!h]
\begin{center}
{\caption{Graph Classification Performance of GCNs and GSA-GCNs}\label{tab:graphclassification}}
\begin{tabular}{p{3cm} c c}
Model & Training Accuracy & Test Accuracy  \\\midrule[0.8pt]
GCN & 95.35 & 85.15  \\ \hline
SPI-GCN \cite{atamnaSPIGCN} & N.A. & 75.72  \\ \hline 
GSA-GCN & \textbf{97.34} & \textbf{88.28} \\ \bottomrule[1.2pt]
\end{tabular}
\end{center}
\end{table}
\par
From the table, it can be observed that the GSA-GCN model can provide the optimal performance among the methods implemented. This positive result indicates that the gain on the expressive power from the GSA mechanism is not limited to node classification tasks. To further illustrate the insights of the GSA-GCN, we plot the training and testing accuracy curves with respect to the number of epochs in \Cref{fig:coilrag}. From the figure, it can be observed that the curves of both the training and testing accuracy of GSA-GCN stand to a higher level than their plain GCN counterpart. The curve also supports the overfitting-resistance property of the GSA mechanism: we can observe that in the later epochs, the testing accuracy of the plain GCN starts to decline, while the test accuracy for the GCN-GSA continues to climb.

\begin{figure}[!h]
\centering
\includegraphics[width=0.5\textwidth]{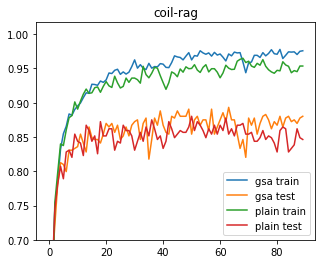}
\caption{\label{fig:coilrag} Graph Classification Accuracy for Plain and GSA GCNs}
\end{figure}
\par

\section{Conclusion}
\label{sec:Conclusion}
In this paper, we study the impact of the Global Self-attention mechanism on Graph Convolutional Networks. To the best of the our knowledge, this is the first concrete attempt to understand such mechanism on GCNs. We first provide a straightforward way to implement the GSA mechanism to GCNS. Subsequently, we theoretically prove that GSA mechanism can mitigate overfitting and over-smoothing problems in GCN-based models based on some recent results. Experiments on two classical tasks illustrate superior expressive power of GSA mechanism against advanced (and much more complicated) variations of the GCN. Furthermore, the theoretical results on the alleviation of overfitting and over-smoothing are reflected in the empirical results.

\bibliographystyle{unsrt}  
\bibliography{references}  
\newpage

\end{document}